\documentclass[conference]{IEEEtran}
\usepackage{amsmath,amssymb,amsfonts,amsthm}
\usepackage{algorithmic}
\usepackage{algorithm}
\usepackage{graphicx}
\usepackage{textcomp}
\usepackage{xcolor}
\usepackage{booktabs}
\usepackage{multirow}
\usepackage{hyperref}
\usepackage{subcaption}

\graphicspath{{./}}

\newtheorem{theorem}{Theorem}

\begin{document}

\title{Adaptive Two-Level Allocation of a Conserved Capacity Budget\\Across Locations and Service Classes}

\author{
\IEEEauthorblockN{Simone Mainardi}
\IEEEauthorblockA{\textit{Network Security} \\
\textit{Salesforce, Inc.}\\
Dublin, Ireland \\
smainardi@salesforce.com}
\and
\IEEEauthorblockN{Kaushal Bansal}
\IEEEauthorblockA{\textit{Network Security} \\
\textit{Salesforce, Inc.}\\
San Francisco, CA, USA \\
kbansal@salesforce.com}
\and
\IEEEauthorblockN{Prabhat Singh}
\IEEEauthorblockA{\textit{Network Security} \\
\textit{Salesforce, Inc.}\\
San Jose, CA, USA \\
prabhatsingh@salesforce.com}
}

\maketitle

\begin{abstract}
We study how to share a single conserved capacity budget across many locations and two service classes when demand is uneven, time-varying, and can exceed supply. The shape recurs: an origin's request-rate cap split across its edge locations, a licensed throughput cap across premium and standard tenants, or an egress budget between latency-critical and batch workloads. We present a two-level algorithm. The first level redistributes capacity within a class across locations by proportional deficit and excess redistribution; the second lends capacity elastically between classes when one has surplus and the other deficit. We prove it conserves the budget exactly, preserves non-negativity, and reaches a stable allocation in one iteration under stationary demand because it carries no per-cycle state, at $O(KN)$ cost per cycle for $K$ classes and $N$ locations. We evaluate it defending a content delivery network's per-domain budget under volumetric attack, where the classes are confirmed-legitimate and not-yet-cleared traffic; across 8 contention scenarios on a 22-location topology it serves 66--93\% of high-priority demand, competitive with a single-class linear-programming optimum, while never leaving capacity idle or over-committing whenever aggregate demand meets or exceeds the budget (the contention regime these scenarios evaluate). Two findings carry beyond the application. First, a throughput-maximizing objective is wrong under contention: a two-class LP maximizing total served load serves less high-priority load than our demand-proportional, reservation-respecting allocator in most scenarios, because it cannot tell that some load it serves is the contention. Second, inter-class borrowing earns its complexity under bursty load, improving high-priority service by 1.5 points (isolated by ablation), and is neutral under stationary demand. A 5-location prototype with real HTTP traffic validates the pipeline.
\end{abstract}

\begin{IEEEkeywords}
resource allocation, capacity management, fairness, service classes, distributed systems, content delivery networks
\end{IEEEkeywords}

\section{Introduction}
\label{sec:intro}

A recurring problem in distributed systems is sharing one fixed capacity budget across many locations and a small number of service classes. A controller holds a conserved budget $C$ and must, every decision cycle, decide how much of it each location receives, subject to the constraint that the allocations sum to $C$. The budget may be the sustainable request rate of an origin server divided across the edge locations that front it~\cite{cdn_survey,taiji2019}, an inter-datacenter bandwidth budget split between services of different priority~\cite{bwe2015,swan2013}, or a licensed throughput cap shared by tenants across data centers. In each case the trade-off is the same: allocate too much to a quiet location and capacity sits idle that a busy location needs; allocate too little to a busy one and its demand is denied. When demand additionally differs in priority, a class that must be protected versus a class that may be served best-effort~\cite{bwe2015}, the controller must split the same budget across classes as well as locations. This is the problem we study: \emph{two-level allocation of a conserved budget across locations and two service classes.}

Two structural facts make this harder than uniform division. First, demand is uneven across locations and shifts over time, a pattern documented for global edge-to-datacenter traffic~\cite{taiji2019} and CDN workloads~\cite{cdn_survey}, so an equal split starves the locations where load actually originates while leaving capacity idle elsewhere. Second, a controller that sizes each location for its own peak must commit, in aggregate, far more than the shared budget: if every one of $N$ locations is provisioned for the full budget, the worst-case committed capacity is $N$ times the budget, and any cycle in which enough locations are simultaneously busy exceeds $C$. Peak-provisioning that leaves the aggregate under-subscribed on average is the same economics that motivated centralized allocation in production wide-area networks~\cite{swan2013,bwe2015}. The controller must therefore reclaim that over-commitment the moment aggregate demand crosses the budget, which is exactly when uniform and static schemes fail.

We propose an algorithm that operates at two levels. At the first level it redistributes capacity \emph{within} a class across locations in proportion to observed demand, moving capacity from locations with excess to locations with deficit. At the second level it allows elastic borrowing \emph{between} the two classes, so capacity left unused by one class flows to the other when that other class has unmet demand, and returns when demand rises again. Both levels maintain a strict conservation invariant: the sum of all allocations equals the budget $C$ at every cycle. The algorithm is stateless, recomputing allocations from current demand each cycle, which yields one-step stabilization under stationary demand by construction: a single application reaches a fixed point of the reallocation step.

The instantiation that motivated this work, and the one we evaluate, is defending a content delivery network's per-domain capacity budget during volumetric attacks~\cite{ddos_cloud_survey,alcoz2022}. There the locations are edge regions and the two classes are \emph{confirmed-legitimate} traffic, which must be protected, and \emph{not-yet-cleared} traffic, which is served at reduced priority rather than dropped because the evidence against it is not conclusive. We use this instantiation for the evaluation and the deployment sections, but the algorithm, its invariants, and its convergence result are stated and proved for the abstract model and do not depend on the contending load being adversarial. We return to the generality, and to what porting the algorithm to another instantiation would require, in Section~\ref{sec:limitations}.

Our contributions are:
\begin{itemize}
    \item A formal two-level allocation algorithm for a conserved budget shared across locations and two service classes, with provable conservation, non-negativity, and one-step stabilization, computable in $O(KN)$ time per cycle for $K$ service classes and $N$ locations (both defined in Section~\ref{sec:algorithm}).
    \item A comparative evaluation against 7 baselines across 8 contention scenarios, with contending load measured in the class it targets. The baselines span the design space: naive splits, the production heuristic, a DRF-style weighted max-min fair allocator, and three LP optima (single-class, reservation-respecting, and unconstrained two-class). The algorithm is competitive with a single-class LP optimum on high-priority load served, and we identify a counterintuitive effect: it serves more high-priority load than a throughput-optimal two-class LP in most scenarios, because maximizing total served load can mean serving the contention itself (Section~\ref{sec:evaluation}).
    \item An ablation isolating the inter-class borrowing mechanism: it improves high-priority service specifically under temporally bursty load and is neutral under stationary demand, locating exactly where the two-level design earns its complexity (Section~\ref{sec:evaluation}).
    \item A prototype deployment on a 5-location testbed with real HTTP traffic validating the counting and aggregation pipeline that feeds the allocator, from per-process counting through cross-location transport (Section~\ref{sec:deployment}).
\end{itemize}

\section{Problem Statement}
\label{sec:problem}

Consider a controller that owns a single capacity budget $C$ and distributes it across $N$ locations every decision cycle. The budget is the maximum aggregate load a shared downstream resource can absorb without degrading. Demand arrives at each location, is observed by the controller once per cycle, and is served only up to the location's current allocation; demand beyond the allocation is denied or deferred.

\paragraph{Two service classes.} Demand at each location is split into two classes that differ in priority. We call them the \emph{high-priority} class, whose demand should be protected, and the \emph{best-effort} class, whose demand is served when capacity allows but yields first under contention. The budget is partitioned between them: each class is given a \emph{reservation}, $R_{\text{hi}}$ for the high-priority class and $R_{\text{be}}$ for the best-effort class, with $R_{\text{hi}} + R_{\text{be}} = C$. A reservation is only the share \emph{initially} assigned to a class; the algorithm may lend capacity across the two classes within a cycle (Section~\ref{sec:algorithm}), so a reservation is a starting point, not a hard cap. We use exactly two classes throughout and do not model a deeper tier hierarchy. Many priority-sharing problems reduce to this two-class form: premium versus standard tenants, latency-critical versus batch jobs, or, in our evaluated instantiation, confirmed-legitimate versus not-yet-cleared traffic. How demand is assigned to a class is an input to the allocator, not a contribution of this paper; in the evaluated instantiation a separate upstream classifier produces the labels (Section~\ref{sec:system}), and demand deemed unserviceable is rejected upstream, consuming no budget and never reaching the allocator.

\paragraph{Notation.} Index $i \in \{1, \ldots, N\}$ ranges over locations and $c \in \{\text{hi}, \text{lo}\}$ over the two classes. We write $a_{c,i}$ for the capacity \emph{allocated} to class $c$ at location $i$ and $d_{c,i}$ for the observed \emph{demand} of class $c$ at location $i$. Aggregates over locations are $A_c = \sum_i a_{c,i}$ and $D_c = \sum_i d_{c,i}$. When the class is clear from context we drop the subscript and write $a_i$, $d_i$. The conservation requirement is $\sum_c \sum_i a_{c,i} = C$ at every cycle.

\subsection{Why Uniform and Static Splits Fail}

Demand across locations is concentrated, not uniform: in the traffic model we use for evaluation (Section~\ref{sec:evaluation}) a minority of locations carry the majority of demand, with a long flat tail across the rest. Under such skew a uniform split, $C/N$ to every location, throttles the busy locations far below their demand while the idle locations sit on capacity they cannot use: the uniform share is $C/N$, but the busiest location can need several times that, so a uniform split denies much of its high-priority demand.

The opposite extreme, sizing every location for its own peak by giving each the full budget, avoids that throttling during quiet periods but over-commits in aggregate: with all $N$ locations provisioned for $C$, the worst-case committed capacity is $N \cdot C$, an $N\times$ over-commitment against a budget of $C$. The committed total is lower when fewer locations are active, scaling with the active count, but once enough locations are simultaneously busy the budget is exceeded and the over-commitment must be reclaimed in the same cycle, before the shared downstream resource is overwhelmed. A correct allocator therefore cannot rely on either fixed extreme; it must track demand.

\subsection{Design Goals}

We want an allocation algorithm that satisfies four properties at each decision cycle:
\begin{enumerate}
    \item \textbf{Conservation:} $\sum_c \sum_i a_{c,i} = C$ at all times.
    \item \textbf{High-priority service:} Maximize served high-priority demand, $\sum_i \min(a_{\text{hi},i}, d_{\text{hi},i})$.
    \item \textbf{Fairness:} Locations with proportionally similar demand receive proportionally similar satisfaction ratios.
    \item \textbf{Efficiency:} Minimize capacity that is allocated but unused.
\end{enumerate}

\section{Algorithm}
\label{sec:algorithm}

The algorithm runs at the controller on a fixed decision cycle. Each cycle it observes per-location, per-class demand, computes new allocations, and distributes them to the locations for enforcement. It carries no state between cycles: allocations are recomputed from the current demand and the fixed reservations.

\subsection{Intra-Class Reallocation}

For a single class with total reserved capacity $R$ distributed across $N$ locations, let $\mathbf{a} = [a_1, \ldots, a_N]$ be the current allocations and $\mathbf{d} = [d_1, \ldots, d_N]$ be the observed demands.

We classify each location as having either excess capacity ($a_i > d_i$) or deficit ($d_i > a_i$), then redistribute according to Algorithm~\ref{alg:intra}.

\begin{algorithm}[t]
\caption{Intra-Class Reallocation}
\label{alg:intra}
\begin{algorithmic}[1]
\REQUIRE Allocations $\mathbf{a}$, demands $\mathbf{d}$
\ENSURE New allocations $\mathbf{a'}$ with $\sum a'_i = \sum a_i$
\STATE $e_i \leftarrow \max(0, a_i - d_i)$ for all $i$
\STATE $f_i \leftarrow \max(0, d_i - a_i)$ for all $i$
\STATE $E \leftarrow \sum_i e_i$; \quad $F \leftarrow \sum_i f_i$
\IF{$E = 0$ or $F = 0$}
    \RETURN $\mathbf{a}$
\ENDIF
\IF{$E \geq F$}
    \STATE \textit{// Enough excess to cover all deficits}
    \FORALL{$i$ with $f_i > 0$}
        \STATE $a'_i \leftarrow d_i$
    \ENDFOR
    \FORALL{$i$ with $e_i > 0$}
        \STATE $a'_i \leftarrow a_i - \frac{e_i}{E} \cdot F$
    \ENDFOR
\ELSE
    \STATE \textit{// Not enough excess, share proportionally}
    \FORALL{$i$ with $e_i > 0$}
        \STATE $a'_i \leftarrow d_i$
    \ENDFOR
    \FORALL{$i$ with $f_i > 0$}
        \STATE $a'_i \leftarrow a_i + \frac{f_i}{F} \cdot E$
    \ENDFOR
\ENDIF
\RETURN $\mathbf{a'}$
\end{algorithmic}
\end{algorithm}

The key insight is proportionality. When excess exceeds deficit, every deficit location gets exactly what it needs, and excess locations give up proportional to how much spare capacity they have. When deficit exceeds excess, all available excess is distributed proportionally to how much each deficit location needs. This ensures fairness without requiring optimization solvers.

\begin{theorem}[Conservation]
\label{thm:conservation}
Algorithm~\ref{alg:intra} preserves $\sum_i a'_i = \sum_i a_i$.
\end{theorem}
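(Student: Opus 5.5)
The plan is a direct bookkeeping argument: partition the locations by sign of $a_i - d_i$, and show that the total change $\sum_i (a'_i - a_i)$ is zero in each branch of Algorithm~\ref{alg:intra}. Let $P = \{i : e_i > 0\}$ (excess), $Q = \{i : f_i > 0\}$ (deficit), and $Z = \{i : a_i = d_i\}$. Since $e_i$ and $f_i$ cannot both be positive, these three sets partition $\{1,\ldots,N\}$. The algorithm never assigns $a'_i$ for $i \in Z$. I will make explicit the convention, implicit in the pseudocode, that unassigned entries keep their value, so $a'_i = a_i$ on $Z$; these indices contribute nothing to the change.

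\textbf{Early-return branch.} If $E = 0$ or $F = 0$, then $\mathbf{a}' = \mathbf{a}$ and there is nothing to prove.

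\textbf{Branch $E \ge F$ (with $E, F > 0$).} I will use two identities.
\begin{itemize}
\item For $i \in Q$, $d_i = a_i + f_i$, so setting $a'_i \leftarrow d_i$ increases $a_i$ by exactly $f_i$. Summed over $Q$, the increase is $F$.
\item For $i \in P$, the decrease is $\frac{e_i}{E}F$. Summed over $P$, this is $\frac{F}{E}\sum_{i\in P} e_i = F$, using $\sum_{i \in P} e_i = E$ because $e_i = 0$ off $P$.
\end{itemize}
The net change is $F - F = 0$.

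\textbf{Branch $E < F$.} This case is symmetric.
\begin{itemize}
\item For $i \in P$, $d_i = a_i - e_i$, so the total decrease over $P$ is $E$.
\item For $i \in Q$, the increase is $\frac{f_i}{F}E$. Summed over $Q$, this is $E$, since $\sum_{i\in Q} f_i = F$.
\end{itemize}
The net change is again zero.

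Adding the contributions of $P$, $Q$ and $Z$ gives $\sum_i a'_i = \sum_i a_i$ in every case. The only real obstacle is not algebraic. It is making the treatment of $Z$ (and, in each branch, the fact that the two loops touch disjoint index sets) explicit, so that every $a'_i$ is well defined and no index is counted twice. The division by $E$ or $F$ is safe because the early return excludes $E = 0$ and $F = 0$.
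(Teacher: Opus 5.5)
Your proof is correct and follows the paper's argument: in each branch the side pinned to demand changes by exactly $F$ (resp.\ $E$), and the proportional side changes by $\frac{F}{E}\sum_i e_i = F$ (resp.\ $\frac{E}{F}\sum_i f_i = E$), so the net change is zero. You also handle the early-return branch and the locations with $a_i = d_i$ explicitly, which the paper leaves implicit; this tightens the bookkeeping without changing the route.
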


\begin{proof}
In Case 1 ($E \geq F$): deficit locations gain a total of $F$. Excess locations lose $\sum_{i: e_i > 0} \frac{e_i}{E} \cdot F = \frac{F}{E} \sum_{i: e_i > 0} e_i = F$. Net change is zero.

In Case 2 ($E < F$): excess locations lose a total of $E$ (they drop to their demand level). Deficit locations gain $\sum_{i: f_i > 0} \frac{f_i}{F} \cdot E = \frac{E}{F} \sum_{i: f_i > 0} f_i = E$. Net change is zero.
\end{proof}

\begin{theorem}[Non-negativity]
If all inputs are non-negative, all outputs are non-negative.
\end{theorem}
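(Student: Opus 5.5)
The plan is to argue directly from Algorithm~\ref{alg:intra}, assuming $a_i \ge 0$ and $d_i \ge 0$ for all $i$. I will split along the same branches the algorithm takes. In each branch I will check every location, including those with $e_i = f_i = 0$, which the loops never touch and which therefore keep $a'_i = a_i$.

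First I handle the early-return branch, $E = 0$ or $F = 0$. Here $\mathbf{a'} = \mathbf{a}$, which is non-negative by hypothesis. The same holds in the remaining branches for any location with $a_i = d_i$: it is left unchanged, so $a'_i = a_i \ge 0$. A location cannot have both $e_i > 0$ and $f_i > 0$, so each location is assigned at most once and the cases below are exhaustive.

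\textbf{Case 1 ($E \ge F > 0$).} For a deficit location, $a'_i = d_i \ge 0$. For an excess location,
\[
a'_i = a_i - \tfrac{e_i}{E}F \ge a_i - e_i = a_i - (a_i - d_i) = d_i \ge 0 .
\]
This uses $F/E \le 1$ and $e_i > 0$.

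\textbf{Case 2 ($0 < E < F$).} For an excess location, $a'_i = d_i \ge 0$. For a deficit location, $a'_i = a_i + \tfrac{f_i}{F}E \ge a_i \ge 0$, since $f_i, E, F > 0$.

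The only step needing care is the excess branch of Case 1. There the subtracted amount must be bounded using $F \le E$, which gives $e_i F/E \le e_i$. The conclusion is also slightly stronger than stated: an excess location never drops below its own demand $d_i$.
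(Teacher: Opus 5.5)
Your proof is correct and follows the paper's argument essentially verbatim. It uses the same two-case split, the bound $\frac{e_i}{E}F \le e_i$ (from $F \le E$) to keep excess locations at or above $d_i$ in Case 1, and $a_i + \frac{f_i}{F}E \ge a_i$ in Case 2, and your explicit handling of the early-return branch and of untouched locations with $a_i = d_i$, which the paper leaves implicit, is a small but welcome addition.
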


\begin{proof}
In Case 1, deficit locations receive $d_i \geq 0$. Excess locations retain $a_i - \frac{e_i}{E} \cdot F \geq a_i - \frac{e_i}{E} \cdot E = a_i - e_i = d_i \geq 0$.

In Case 2, excess locations receive $d_i \geq 0$. Deficit locations receive $a_i + \frac{f_i}{F} \cdot E \geq a_i \geq 0$.
\end{proof}

\subsection{Inter-Class Elastic Allocation}

The second level handles reallocation between classes. Given $K$ classes with reservations $\{R_c\}$ and aggregate demands $\{D_c = \sum_i d_{c,i}\}$, a class has surplus when $R_c > D_c$ and deficit when $D_c > R_c$. We state the algorithm for general $K$; throughout this paper $K = 2$.

\begin{algorithm}[t]
\caption{Inter-Class Elastic Allocation}
\label{alg:inter}
\begin{algorithmic}[1]
\REQUIRE Class reservations $\{R_c\}$, class demands $\{D_c\}$
\ENSURE Class budgets $\{A_c\}$
\STATE $A_c \leftarrow \min(D_c, R_c)$ for all $c$
\STATE $s_c \leftarrow \max(0, R_c - D_c)$ for all $c$
\STATE $f_c \leftarrow \max(0, D_c - R_c)$ for all $c$
\STATE $S \leftarrow \sum_c s_c$; \quad $F \leftarrow \sum_c f_c$
\IF{$S > 0$ and $F > 0$}
    \FORALL{$c$ with $f_c > 0$}
        \STATE $A_c \leftarrow A_c + \min\left(f_c,\; S \cdot \frac{f_c}{F}\right)$
    \ENDFOR
\ENDIF
\RETURN $\{A_c\}$
\end{algorithmic}
\end{algorithm}

When one class has unused capacity, for example during a window in which best-effort demand momentarily drops, that capacity flows to the other class and returns when the lending class's demand rises again. This inter-class mechanism is what lets the two-level algorithm serve more high-priority demand than a per-cycle optimum when demand varies over time (Section~\ref{sec:evaluation}). It does not contradict LP optimality: a per-cycle LP recomputes from current demand and has no notion of holding capacity across a temporal cycle, whereas our algorithm's elastic borrowing spans cycles. The same mechanism would also be driven by any process that shifts demand between the classes over time, such as labels being refined by an upstream classifier; we do not model such dynamics here, so the benefit we report comes purely from temporal variation in load volume.

\subsection{Combined Two-Level Algorithm}

Each decision cycle executes three steps:
\begin{enumerate}
    \item Run Algorithm~\ref{alg:inter} to determine per-class budgets $\{A_c\}$.
    \item For each class $c$, scale base per-location allocations by $A_c / R_c$.
    \item For each class $c$, run Algorithm~\ref{alg:intra} on the scaled allocations.
\end{enumerate}

The time complexity is $O(KN)$ per decision cycle where $K$ is the number of classes (two throughout this paper) and $N$ is the number of locations. In the deployment described in Section~\ref{sec:system}, with 2 classes and 22 locations, each cycle is a fixed number of passes over $KN = 44$ per-location entries, negligible against the 10-second decision cycle.

\subsection{Convergence}

The algorithm is stateless: each cycle recomputes allocations from the current demand and a fixed base reservation, and never feeds the previous cycle's output back as input. What this buys is stability, not a unique closed-form target: we show below that one application of the reallocation step lands on a fixed point of that step, so a second application with the same demand changes nothing. We are careful about what this does and does not claim. It does \emph{not} claim a unique allocation independent of the input: when a class is under-provisioned (aggregate demand exceeds the class budget), the step has a continuum of fixed points, and two equal-sum input allocations under the same demand can settle on different stable allocations. The result matters less as a convergence guarantee in its own right and more as a contrast with heuristics whose allocation is a discontinuous function of the demand vector, such as the breach-count scheme evaluated in Section~\ref{sec:evaluation}, which can fail to settle when demand noise moves locations back and forth across its threshold.

\begin{theorem}[One-Step Stabilization]
Fix a demand vector $\mathbf{d}$ and let $\mathbf{a}$ be any allocation with $\sum_i a_i = T$. Let $\mathbf{a'}$ be the output of Algorithm~\ref{alg:intra} on $(\mathbf{a}, \mathbf{d})$. Then applying Algorithm~\ref{alg:intra} again to $(\mathbf{a'}, \mathbf{d})$ returns $\mathbf{a'}$ unchanged: one application reaches a fixed point of the reallocation step. The fixed point is not unique in general; it depends on $\mathbf{a}$ when the class is deficit-dominant (Case 2 below).
\end{theorem}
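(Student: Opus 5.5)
The plan is to show that after one application of Algorithm~\ref{alg:intra}, the output $\mathbf{a'}$ always falls into the early-return branch of the algorithm. Write $e'_i = \max(0, a'_i - d_i)$, $f'_i = \max(0, d_i - a'_i)$, $E' = \sum_i e'_i$ and $F' = \sum_i f'_i$. It suffices to show that $E' = 0$ or $F' = 0$. Then the test ``$E = 0$ or $F = 0$'' fires on the second call, and the algorithm returns its input $\mathbf{a'}$ unchanged.

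Before the case split I fix one convention, which I expect to be the only real subtlety. The pseudocode assigns $a'_i$ only to locations with $e_i > 0$ or $f_i > 0$. I will read it as leaving $a'_i = a_i$ for every balanced location with $a_i = d_i$. Such a location has $e'_i = f'_i = 0$ in every branch, so it never affects $E'$ or $F'$. I will also note that $\sum_i a'_i = T$ by Theorem~\ref{thm:conservation}, although the argument does not really need it.

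The argument then runs through the three branches of the first call.

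\emph{Early return.} If $E = 0$ or $F = 0$, then $\mathbf{a'} = \mathbf{a}$, and the same test fires again on the second call.

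\emph{Case 1} ($E \ge F > 0$). Every deficit location is set to $a'_i = d_i$, so its $e'_i$ and $f'_i$ are both zero. Every excess location gets
\[
a'_i = a_i - \tfrac{e_i}{E}F \;\ge\; a_i - e_i = d_i,
\]
using $F \le E$; this is the same inequality as in the non-negativity proof. Hence $f'_i = 0$ for every $i$, so $F' = 0$.

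\emph{Case 2} ($0 < E < F$). Every excess location is set to $a'_i = d_i$. Every deficit location gets
\[
a'_i = a_i + \tfrac{f_i}{F}E \;\le\; a_i + f_i = d_i,
\]
using $E < F$. Hence $e'_i = 0$ for every $i$, so $E' = 0$.

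In all three branches the second call returns $\mathbf{a'}$, which proves the fixed-point claim.

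For the non-uniqueness remark I will give a small Case~2 example. Take $\mathbf{d} = (10, 10, 0)$, and compare the inputs $\mathbf{a} = (0, 4, 6)$ and $\mathbf{a} = (2, 2, 6)$, both with total $T = 10$. In both, the third location is the only excess location, with $E = 6 < F$, and the two deficit locations share that excess in proportion to their deficits. The first input gives $f = (10, 6)$, $F = 16$, and outputs $(3.75,\, 6.25,\, 0)$. The second gives $f = (8, 8)$, $F = 16$, and outputs $(5,\, 5,\, 0)$. Both outputs are fixed points by the argument above, and they differ, so the stable allocation depends on $\mathbf{a}$ exactly as the statement says.

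The main thing to check carefully is the boundary behaviour. Ties such as $E = F$ fall into Case~1, where $a'_i = d_i$ exactly for the excess locations, and that still gives $F' = 0$. Balanced locations must be handled by the convention above, so that they contribute to neither $E'$ nor $F'$.
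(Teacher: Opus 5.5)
Your proof is correct and follows the paper's route: split on the branch the first call takes, show that Case~1 pins the deficit side to demand so $F(\mathbf{a'}) = 0$ and Case~2 pins the excess side so $E(\mathbf{a'}) = 0$, and conclude that the early-return guard fires on the second call. You are more careful than the paper, which in Case~2 never checks that the former deficit locations stay at or below demand (your bound $a_i + \frac{f_i}{F}E \le d_i$); you also avoid needing conservation, and your numerical example makes the non-uniqueness remark concrete.
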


\begin{proof}
Let $T = \sum_i a_i$ (preserved by Theorem~\ref{thm:conservation}). Define $E(\mathbf{a}) = \sum_i \max(0, a_i - d_i)$ (total excess) and $F(\mathbf{a}) = \sum_i \max(0, d_i - a_i)$ (total deficit), and note $E - F = T - \sum_i d_i$ is fixed for given $T$ and $\mathbf{d}$.

After one application, every location that had excess ($a_i > d_i$) or deficit ($a_i < d_i$) is moved toward its demand: in Case 1 ($E \geq F$) deficit locations are set exactly to $d_i$; in Case 2 ($E < F$) excess locations are set exactly to $d_i$. So in $\mathbf{a'}$, one side of the excess/deficit split is pinned to demand. Consider the second application on $\mathbf{a'}$:

Case 1 ($E \geq F$): deficit locations of $\mathbf{a'}$ already satisfy $a'_i = d_i$, so they contribute zero deficit; only the former excess locations retain $a'_i \geq d_i$, and their total excess is $E - F = T - \sum_i d_i \geq 0$, matched by zero deficit. With $F(\mathbf{a'}) = 0$, Algorithm~\ref{alg:intra} returns its input unchanged (the $F = 0$ guard). Case 2 ($E < F$): excess locations of $\mathbf{a'}$ satisfy $a'_i = d_i$, so $E(\mathbf{a'}) = 0$, and the $E = 0$ guard returns the input unchanged. Either way $\mathbf{a'}$ is a fixed point.

Non-uniqueness in Case 2: the deficit locations receive $a'_i = a_i + \frac{f_i}{F}E$ with $f_i = d_i - a_i$, which depends on the input $\mathbf{a}$, so two equal-sum inputs under the same $\mathbf{d}$ can yield different (each still stable) outputs. The claim is one-step stabilization, not a unique target.
\end{proof}

This matches the empirical observation in Section~\ref{sec:evaluation}: the allocation settles in 1 iteration across all 8 scenarios. The reason a \emph{single} run of the deployed controller is fully determined despite the non-uniqueness is that the controller recomputes each cycle from a fixed base allocation rather than from the previous cycle's output, so the input to the step is the same every cycle: determinism here comes from a fixed input, not from a unique fixed point. Because the allocation is a continuous, demand-proportional function of that input, it avoids the instability of schemes whose allocation depends discontinuously on how many locations cross a threshold.

\section{Instantiation: CDN Capacity Defense}
\label{sec:system}

We now ground the abstract model in the deployment that motivated it. In the CDN instantiation the locations are edge regions, the high-priority and best-effort classes are confirmed-legitimate and not-yet-cleared traffic, and the conserved budget is a domain's per-domain limit in requests per minute (RPM), representing the maximum load the domain's origin server can handle without degradation. The decision cycle is 10 seconds. The allocator runs inside a hierarchical counting and control system spanning 22 geographic regions, organized in four tiers:

\textbf{Pod Counter (Tier 1):} A Rust library compiled as both a native shared object (for Nginx via LuaJIT FFI) and a WASM module (for Envoy). It performs lock-free atomic per-domain counting on each request and flushes deltas to the cluster on a short fixed interval.

\textbf{Cluster Aggregator (Tier 2):} A Go service that receives pod flushes, maintains a Redis-backed sliding window, and forwards aggregated counts to the region.

\textbf{Region Aggregator (Tier 3):} A Go service maintaining a G-Counter CRDT per domain. It participates in cross-region synchronization periodically. The CRDT guarantees eventual consistency and partition tolerance without requiring coordination.

\textbf{Control Plane (Tier 4):} A central Go service that pulls global state from all region aggregators, detects hot domains, consumes per-request class labels from an upstream classifier (Section~\ref{sec:labels}), runs the allocation algorithm of Section~\ref{sec:algorithm}, and pushes new allocations back to the regions.

Each stage adds a small fixed delay, so the end-to-end latency from a demand change to a new allocation is on the order of the decision cycle; the exact per-tier intervals are deployment-tuning parameters rather than properties of the algorithm.

\subsection{Where the Class Labels Come From}
\label{sec:labels}

How requests are assigned to the high-priority and best-effort classes is an input to the allocator, not a contribution of this paper; any mechanism that produces a per-request class label can be substituted, and the allocation algorithm is unchanged. In our CDN instantiation an upstream classifier assigns each request to confirmed-legitimate (high-priority) or not-yet-cleared (best-effort), and requests that match threat-intelligence feeds outright are dropped upstream and never reach the allocator. The only property the allocator relies on is that labels may be \emph{refined over time}: as an attack progresses and the classifier gains confidence, actors move out of the not-yet-cleared class, reducing best-effort demand and freeing capacity that inter-class borrowing can lend to the high-priority class. We do not model these classifier dynamics in the evaluation; the temporal benefit we report (Section~\ref{sec:evaluation}) comes purely from variation in load volume.

\section{Evaluation}
\label{sec:evaluation}

We evaluate in the CDN instantiation, and this section uses its vocabulary: locations are regions, high-priority demand is legitimate traffic, and the contending load is attack traffic.

\subsection{Experimental Setup}

We evaluate on a simulated CDN with 22 regions, matching our deployment topology. The total capacity budget is 10,000 RPM, partitioned into a 75\% high-priority reservation and a 25\% best-effort reservation. Of the legitimate traffic itself, 85\% is high-priority and 15\% is best-effort; the high-priority reservation is sized slightly above the high-priority share so that peacetime traffic fits within it. Legitimate traffic runs at 80\% of capacity on average (75\% in the diurnal scenario, which adds a daily swing).

Demand is geographically skewed. We assign region weights so that a minority of regions carry most of the demand: in the default configuration the three busiest regions together draw a little over 40\% of demand and the top eight about two-thirds, with the remaining regions sharing a flat tail. Long-tailed geographic concentration, where a small number of regions dominate request volume, is a qualitative pattern reported for CDN and edge traffic~\cite{cdn_survey,taiji2019}; the specific concentration here is a parameter of our traffic model, not a measurement of any one deployment. We model time-of-day demand with a multiplicative diurnal factor $1 + \alpha\sin(2\pi t / 100)$ applied to the baseline, where $t$ is the iteration index and 100 iterations represent one daily cycle. The amplitude $\alpha$ is 0.3 for the seven non-diurnal scenarios, a $\pm 30\%$ swing around the mean; the diurnal scenario uses $\alpha = 0.25$ so its peak sits near capacity. Per-region demand additionally carries multiplicative noise, a $\pm 30\%$ jitter drawn independently per region per cycle and clipped to $[0.7, 1.3]$; the clip keeps demand strictly positive and bounded so a single draw cannot dominate or zero out a region, matching the $\pm 30\%$ diurnal amplitude. The bound is a modeling choice, not a measured quantity, and the results are not sensitive to its exact value.

All experiments use 200 iterations per run with 5 random seeds each. Attack traffic starts at iteration 20. The first 20 iterations let the system settle into a representative pre-attack allocation and demand level, so that the attack phase measures adaptation from a realistic operating point rather than from a cold start. We report metrics averaged over the attack phase (iterations 20-200). Attack load is not confined to one class: each scenario splits it between the two classes by a per-scenario fraction, with the application-layer portion that mimics legitimate users landing in the high-priority class and the purely volumetric portion in the best-effort class. This split is deliberate, it forces genuine contention inside the high-priority class rather than measuring a class the attack never reaches, and the fraction is varied across scenarios (from best-effort-heavy volumetric floods to high-priority-heavy application-layer attacks) so the evaluation covers both.

\subsection{Contention Scenarios}

We evaluate 8 scenarios designed to create real capacity contention (total demand exceeds capacity). They are described in the CDN instantiation's terms (an attack is the source of the excess load), but each is simply a spatial-temporal pattern of demand that exceeds the budget:

\begin{itemize}
    \item \textbf{Concentrated:} 100\% capacity excess on a single region.
    \item \textbf{Distributed:} 80\% capacity excess spread across all 22 regions.
    \item \textbf{Rotating:} Excess shifts between 3 regions every 10 iterations.
    \item \textbf{Pulse:} On/off bursts at 120\% capacity every 5 iterations.
    \item \textbf{Slow ramp:} Gradual increase from 0 to 150\% capacity over 40 iterations.
    \item \textbf{Diurnal + Attack:} Excess load during the peak-traffic hour.
    \item \textbf{Contention:} Simultaneous excess on 4 regions.
    \item \textbf{Asymmetric:} Excess targets quiet regions (opposite of where high-priority demand concentrates).
\end{itemize}

\subsection{Baselines}

\begin{itemize}
    \item \textbf{Static Uniform:} Each region gets $C/N$ regardless of demand.
    \item \textbf{Divide-by-Breach:} Divide the budget equally among the locations currently over their limit; within a location, a fixed 75/25 ratio splits capacity between the classes. This is representative of the simple over-limit-counting heuristics common in production rate limiters prior to demand-aware allocation, included only to show how such a scheme behaves on these scenarios; we make no claim that it is optimal or a state-of-the-art baseline. It has three structural weaknesses the results make precise: it cannot distinguish a location far over its limit from one slightly over (both get the same share); it cannot move capacity left unused by one class to the other; and its allocation is a discontinuous function of the over-limit count, so it can fail to settle when many locations sit near the threshold.
    \item \textbf{Proportional:} Allocate proportional to observed demand.
    \item \textbf{LP Optimal (1-class):} Water-filling solution maximizing $\sum_i \min(a_i, d_i)$ subject to $\sum_i a_i = R_c$ within each class reservation. This is the theoretical optimum for single-class allocation.
    \item \textbf{LP Optimal (2-class):} Water-filling over the pooled budget across both classes and all regions, maximizing $\sum_c \sum_i \min(a_{c,i}, d_{c,i})$ subject to $\sum_{c,i} a_{c,i} = C$. This is the throughput optimum for the full two-dimensional problem, and the correct upper bound for the inter-class-borrowing claim. It is allowed to move capacity across both classes, so the proposed algorithm cannot exceed it on total served traffic within a cycle; comparing against it prevents overstating the contribution.
    \item \textbf{LP Optimal (reservation-respecting):} The point between the two LP baselines above. It first guarantees each class a floor of $\min(D_c, R_c)$ within its own reservation, then pools the leftover budget $C - \sum_c \min(D_c, R_c)$ across all (class, location) cells by water-filling on residual demand. Unlike the 2-class LP it cannot starve the high-priority class to serve contending load, and unlike the 1-class LP it can lend a class's idle capacity to the other. It is the \emph{optimal} version of the borrow-above-floor policy the proposed algorithm implements heuristically, so it is the tightest reference for that policy.
    \item \textbf{Weighted Max-Min (DRF-style):} Weighted max-min fair allocation over all (class, location) cells by weighted water-filling: raise a common level $\lambda$ and give each cell $\min(d_{c,i}, \lambda w_c)$, with $\lambda$ chosen so the allocation sums to $C$. The per-class weight $w_c$ is the class reservation, so the high-priority class is weighted three times the best-effort class, matching the reservation split rather than adding a new knob. This follows the Dominant Resource Fairness principle~\cite{drf2011} that the fairness objective must track the structure of demand, and mirrors the per-class weighted max-min fairness used inside SWAN~\cite{swan2013} and BwE~\cite{bwe2015}. It is the principled fairness baseline the rest of the set lacks.
\end{itemize}

\subsection{Metrics}

\begin{itemize}
    \item \textbf{High-Priority Served (\%):} Fraction of high-priority demand that can be satisfied, $\frac{\sum_i \min(a_{\text{hi},i}, d_{\text{hi},i})}{\sum_i d_{\text{hi},i}}$. In the CDN instantiation this is legitimate traffic served, and we use the two terms interchangeably below.
    \item \textbf{Capacity Utilization (\%):} Fraction of allocated capacity that is actually used, $\frac{\sum_i \min(a_i, d_i)}{\sum_i a_i}$.
    \item \textbf{Demand-Weighted Fairness:} Jain's fairness index applied to per-region satisfaction ratios.
    \item \textbf{Convergence:} Number of iterations until allocation changes drop below 5\% for 3 consecutive iterations.
\end{itemize}

\subsection{Results}

Table~\ref{tab:results} summarizes performance across all 8 scenarios.

\begin{table}[t]
\caption{Performance across 8 contention scenarios (mean / worst-case)}
\label{tab:results}
\centering
\small
\begin{tabular}{lcccc}
\toprule
\textbf{Algorithm} & \textbf{Served} & \textbf{Util.} & \textbf{Fair.} & \textbf{Conv.} \\
\midrule
Adaptive (ours)      & 66--93\% & \textbf{100\%} & 0.99 & 1 \\
Weighted Max-Min$^\S$ & 66--95\% & 97\% & 0.99 & 1 \\
LP Optimal (1-class) & 66--92\% & 97\% & 0.99 & 1 \\
LP Optimal (resv.)   & 66--93\% & 97\% & 0.99 & 1 \\
LP Optimal (2-class) & 55--91\% & 97\% & 0.98 & 1 \\
Proportional         & 55--90\% & 97\% & 0.98 & 1--107$^\dagger$ \\
Static Uniform       & 59--69\% & 65\% & 0.95 & 1 \\
Divide-by-Breach     & 75--98\% & 7\%$^\ddagger$ & 0.99 & 1--21$^*$ \\
\bottomrule
\multicolumn{5}{l}{\footnotesize $^*$Fails to settle under distributed attacks (5 of 40 runs).} \\
\multicolumn{5}{l}{\footnotesize $^\dagger$Slow to settle under distributed attacks.} \\
\multicolumn{5}{l}{\footnotesize $^\ddagger$Low utilization reflects gross over-allocation, see text.} \\
\multicolumn{5}{p{0.92\columnwidth}}{\footnotesize $^\S$Higher high-priority served, but by holding the best-effort class below its reservation split, a harder priority policy rather than a strict improvement, see text.} \\
\multicolumn{5}{p{0.92\columnwidth}}{\footnotesize Served ranges span the 8 scenarios (min--max of per-scenario means); they reflect scenario spread, not seed variance. Each per-scenario mean is over 5 seeds with a 95\% CI within $\pm 0.5$ points.}
\end{tabular}
\end{table}

These numbers tell a more nuanced story than utilization alone would suggest, and we
read them honestly. On \emph{high-priority served}, the proposed algorithm
(66--93\%) is competitive with the single-class LP optimum (66--92\%) and the
reservation-respecting LP (66--93\%), and ahead of the two-class LP and proportional
baselines, but it does not top the table on this axis. Two allocators report higher
high-priority ranges, and neither is a free lunch. Divide-by-breach reports the highest
range (75--98\%), but only as an artifact of gross over-allocation: it hands almost every
region far more capacity than the budget permits, which serves high-priority demand well in
simulation but corresponds to exactly the downstream-overload condition the system exists to
prevent (its 7\% utilization quantifies the over-allocation), so we treat it as a cautionary
reference, not a target. Weighted max-min reports 66--95\% and does respect the budget, but
it buys the extra high-priority service by pulling capacity out of the best-effort class
below the balanced reservation split, a stronger priority policy rather than a strict
improvement; we quantify that tradeoff in Section~\ref{sec:served-results}. The served range
is wide because contention varies by scenario; the lower end (66\%) is the distributed
scenario where demand greatly exceeds capacity everywhere, and the upper end (93\%) is the
pulse-wave scenario. Fairness is reported to two decimals; the demand-weighted Jain's index
for the demand-aware allocators is 0.94--0.99, discussed in
Section~\ref{sec:fairness-results}. The utilization column is the subject of the next
subsection.

\subsubsection{Utilization}

Figure~\ref{fig:utilization} shows system-wide capacity utilization, the fraction of allocated capacity matched by demand across both classes. Our algorithm reaches 100\% across all scenarios, the LP allocators and proportional sit near 97\%, static uniform at 65\%, and divide-by-breach at 7\%. The adaptive algorithm reaches 100\% because both of its levels are demand-driven: the inter-class step refuses to leave one class's capacity idle while the other class has unmet demand, and the intra-class step does the same across regions. The remaining 3\% gap to the LP allocators arises because the LP baselines recompute a static per-cycle optimum and can leave small residuals when demand shifts within a cycle.

We are deliberately not making utilization the headline result, because high utilization is necessary but not sufficient: an allocator can be fully utilized while serving the wrong demand. Divide-by-breach sits at the opposite extreme, its 7\% utilization quantifies the gross over-allocation (handing most regions the full budget) that overwhelms the shared downstream resource, the very failure mode the system is built to prevent. Utilization is best read as a guardrail (our algorithm wastes no capacity and never over-commits) rather than as the measure of merit. This full-utilization guarantee is conditional: it holds when aggregate demand meets or exceeds the budget, so that there is enough demand to cover the conserved capacity. All 8 scenarios are oversubscribed by construction (total demand exceeds the budget), which is the regime the system is built for; when aggregate demand falls below the budget no budget-respecting allocator can reach 100\%, because the demand shortfall is idle by definition. The measure of merit is high-priority demand served, which we turn to next.

\begin{figure}[t]
    \centering
    \includegraphics[width=\columnwidth]{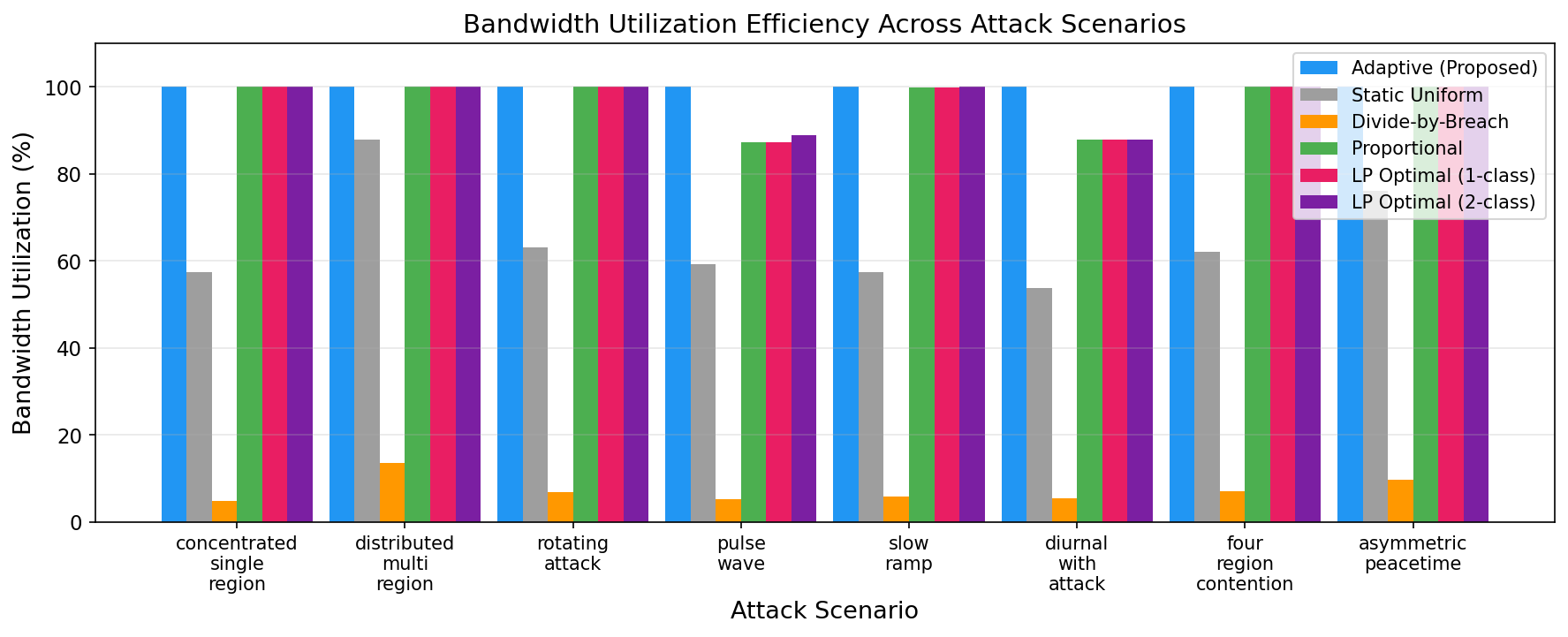}
    \caption{System-wide capacity utilization across 8 contention scenarios. The proposed algorithm wastes no capacity (100\%) without over-committing; divide-by-breach's 7\% reflects gross over-allocation that overwhelms the shared downstream resource.}
    \label{fig:utilization}
\end{figure}

\subsubsection{High-Priority Demand Served}
\label{sec:served-results}

High-priority demand served is the metric that matters, and it is measured on the class that carries the contending load: when contending load (in the CDN instantiation, application-layer attack traffic indistinguishable from legitimate requests) shares the high-priority class, genuine high-priority demand and contending load compete for the same allocation, and a region's high-priority service is its proportional share of what its allocation can cover. Figure~\ref{fig:served} shows the result across all scenarios.

The proposed algorithm serves 66--93\% of high-priority demand depending on scenario severity. It is competitive with the single-class LP optimum (66--92\%) and the reservation-respecting LP (66--93\%), and ahead of proportional (55--90\%). Static uniform trails everything (59--69\%) because it ignores demand: a region with a large share of high-priority demand receives the same $1/22$ slice as an empty one. The reservation-respecting LP is worth reading as the optimal reference for what the proposed algorithm does: it guarantees each class its within-reservation floor and then pools the surplus optimally. In six of eight scenarios that surplus is zero under contention (each class's demand alone exhausts its reservation), so it coincides with the single-class LP; where a class is under-subscribed it pulls ahead, most visibly on pulse-wave (93.2\% versus the single-class LP's 91.8\%). The proposed algorithm is within a point of it in three scenarios and beats it in three others (asymmetric by 2.5 points, distributed and diurnal by under a point), while trailing by up to 6.1 points in the worst case (a concentrated single-region attack, where the reservation-respecting LP's global surplus placement beats the proposed algorithm's proportional redistribution), all while using no solver and carrying no per-cycle state.

The most informative comparison is with the two-class LP (55--91\%), the throughput optimum for the full problem. The proposed algorithm serves \emph{more} high-priority demand than this optimum in most scenarios (five of eight), and the direction of the effect, not the exact count, is the point. This is not a contradiction; it exposes a mismatch of objectives. The two-class LP maximizes total served load across both classes, so when the best-effort class is saturated with contending load, the LP pours capacity into serving that load because doing so raises total throughput. Our allocator instead holds capacity in proportion to per-class demand and reservation, which under contention keeps more capacity on the high-priority class. The lesson is narrower than a blanket ranking: \emph{when the objective is protecting high-priority service under contention, maximizing total throughput can work against that goal}. An allocator that maximizes total served load can serve less high-priority demand than a demand-proportional one, because the optimizer cannot tell that some of the load it is serving is the contention itself. We show this directionally (in five of eight scenarios), not as a universal ordering. A demand-aware, reservation-respecting allocator avoids that trap by construction. The reservation-respecting LP makes this sharp: it is the throughput optimum \emph{subject to} the per-class floor, and it serves at least as much high-priority demand as the unconstrained two-class LP in all eight scenarios (by up to 10.3 points, on the distributed scenario), so the effect is a property of the objective and the reservation constraint, not of our particular heuristic. We are explicit that the effect is directional rather than uniform: in the remaining scenarios the throughput optimum serves more, and in the most adverse case (a concentrated single-region attack) it leads our allocator by 5.5 points, so the takeaway is the existence and cause of the objective mismatch, not a clean sweep. These per-scenario differences are well outside seed noise: the paired per-seed gap against the two-class LP ranges from $+12.0$ points (asymmetric) to $-5.5$ points (concentrated), each with a 95\% confidence interval narrower than $\pm 0.3$ points across the 5 seeds, so the wins and the losses are both real rather than sampling artifacts.

Weighted max-min fairness serves more high-priority demand than the proposed algorithm in seven of eight scenarios (by 4.2 points on average across those seven, up to 8.9 on the concentrated attack), but this is a policy difference, not a strict improvement, and reading only the high-priority column hides it. Weighted max-min caps how much any single cell can draw at $\lambda w_c$, which under a concentrated attack limits the capacity the attacked high-priority cells can pull and so protects the rest, but it funds that protection by driving the best-effort class below the balanced reservation split. On the concentrated attack it serves 89.4\% of high-priority demand against our 80.5\%, while its best-effort service falls to 30.5\% against our 40.8\%; the same pattern holds on rotating (best-effort 34.4\% versus 44.3\%) and four-region (35.7\% versus 44.2\%). The proposed algorithm and the reservation-respecting LP hold best-effort service at an identical level in every scenario, with the single-class LP matching them in seven of eight (it leaves a small residual only on pulse-wave), because all three respect the reservation split; weighted max-min sits below them and the throughput-maximizing two-class LP sits above them. Which point on that spectrum is correct is a deployment choice about how hard to prioritize the protected class at the expense of the other, not a question this evaluation settles. We report weighted max-min to make the choice explicit and to show that the proposed algorithm occupies the reservation-respecting middle, matching the LP references on best-effort service while using no solver.

We do not claim the top of the table. Divide-by-breach reports the highest high-priority range (75--98\%), but only by over-allocating so severely (7\% utilization) that it would overwhelm the shared downstream resource in deployment, which is the problem statement of this paper, not a solution to it. Weighted max-min reports the next highest (66--95\%) by trading away best-effort service as just described. Among the allocators that both respect the capacity budget and hold the reservation split, the proposed algorithm, the reservation-respecting LP, and the single-class LP lead on high-priority service, and the proposed algorithm alone does so with no solver, no per-cycle state, and full utilization.

\begin{figure}[t]
    \centering
    \includegraphics[width=\columnwidth]{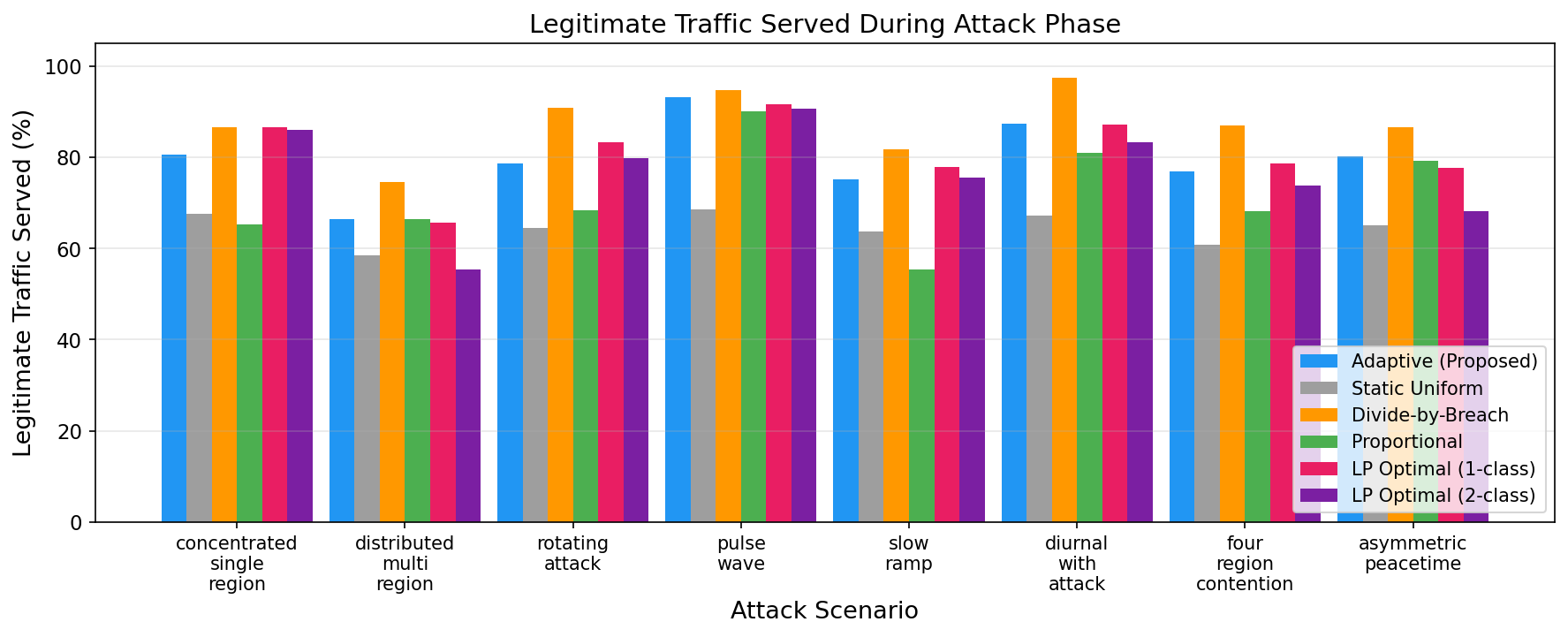}
    \caption{High-priority demand served during the contention phase across all eight scenarios. The proposed algorithm is competitive with the single-class LP optimum and serves more than the throughput-optimal two-class LP in most scenarios, because maximizing total throughput can mean serving the contending load at the expense of high-priority demand.}
    \label{fig:served}
\end{figure}

\subsubsection{Bursty Load: where inter-class borrowing earns its place}

The pulse-wave scenario is where the two-level design shows a clear, isolated benefit. Our algorithm serves 93.1\% of high-priority demand, against 91.8\% for the single-class LP and 90.7\% for the two-class LP. To confirm this advantage comes from inter-class borrowing specifically, and not from the intra-class redistribution that the baselines also approximate, we ran an ablation: the same algorithm with the inter-class step disabled (intra-class only), realized as a registered variant of the allocator and run through the identical experiment harness. The ablated version serves 91.6\%, the full algorithm 93.1\%, so the entire 1.5-point gain is attributable to inter-class borrowing. This gain is stable across seeds: over the 5 seeds the paired per-seed difference is $+1.49$ points (95\% CI $[1.47, 1.51]$, paired $t$-test), far outside seed noise. Across the other seven scenarios the same ablation changes high-priority service by 0.1 point or less, so the mechanism is neutral away from bursty load.

The gain is also not an artifact of the pulse scenario's chosen contention split. The split between the two classes within the excess load is a scenario parameter (the fraction of attack volume that lands in the high-priority class), and pulse fixes it at 0.2. Sweeping that fraction from 0.0 to 0.8 on the pulse scenario, the inter-class borrowing gain is a flat $+1.49$ points at every value, rising only at the degenerate end where nearly all excess is high-priority ($+1.86$ points at 0.9, $+4.05$ points at 1.0). The same sweep on the two non-bursty scenarios (distributed and four-region) yields essentially no gain across 0.0 to 0.8 (exactly 0.00 points through 0.6, and under 0.1 point at 0.8), and spikes only at the same degenerate split of 1.0, in fact higher there than pulse ($+8.6$ points on distributed and $+6.1$ points on four-region), because at that corner all excess is high-priority and nothing remains in the best-effort class to contend, so borrowing is pathological for every scenario alike. That all three scenarios spike at the split of 1.0 confirms the corner is an artifact of the pathological all-high-priority split, not evidence about burstiness. Across the realistic 0.0 to 0.8 range the benefit tracks temporal burstiness, not a hand-picked contention split: it holds across the full range of splits on the bursty scenario and is absent across the same range on the stationary ones.

The mechanism is temporal. During each off-peak window, best-effort demand drops, freeing capacity; inter-class borrowing reclaims it for the high-priority class for the duration of the lull, then returns it when the next burst arrives. A per-cycle optimizer, single-class or two-class, recomputes from the current demand each cycle and has no notion of holding or returning capacity across the cycle, so it cannot capture this gain. This is the honest core of the contribution: inter-class elastic borrowing helps high-priority service specifically under temporally bursty load, and is neutral, neither helping nor hurting, when demand is stationary.

\subsubsection{Convergence}

Figure~\ref{fig:convergence} shows convergence behavior. Our algorithm, both LPs, and static uniform settle in a single iteration in every scenario, because each recomputes allocations from the current demand and a fixed base reservation without feeding the previous cycle's output back as input. Computing from current demand is necessary but not sufficient for one-step settling: the proportional baseline also recomputes from scratch each cycle, yet on the distributed scenario it never settles (green bar, 59 iterations), because it passes per-region demand straight through with no reservation floor or damping. When the attack spreads large, noisy demand across all 22 regions at once, the per-region shares jitter with the $\pm30\%$ demand noise and the largest single-region reallocation stays above the 5\% convergence threshold from one cycle to the next. The proposed algorithm avoids this because it reallocates only the surplus or deficit relative to the fixed reservation base, which bounds the per-cycle movement and damps the demand noise rather than tracking it. This is the same failure mode as divide-by-breach reaching it by a different route (proportional tracks noise linearly; divide-by-breach jumps discontinuously as regions cross the over-limit threshold), and it is exactly the instability the one-step-stabilization result rules out for our allocator.

Divide-by-breach fails to stabilize under distributed attacks (shown as -1 in the data). The cause is not a feedback loop, the over-limit set is determined entirely by exogenous per-region demand against a fixed threshold, and does not depend on the previous allocation. Rather, the allocation is a discontinuous function of the over-limit count: each region that crosses the threshold changes the divisor, which shifts every region's budget at once. Under a distributed attack many regions sit near the threshold, so per-cycle demand noise repeatedly pushes them across it, the count fluctuates, and the allocation never settles. Under slow-ramp attacks the over-limit set instead grows monotonically as the ramp proceeds, adding regions one at a time, which is why stabilization tracks the ramp duration rather than oscillating. The proposed algorithm avoids both behaviors because its allocation is a continuous, demand-proportional function of the current demand vector.

\begin{figure}[t]
    \centering
    \includegraphics[width=\columnwidth]{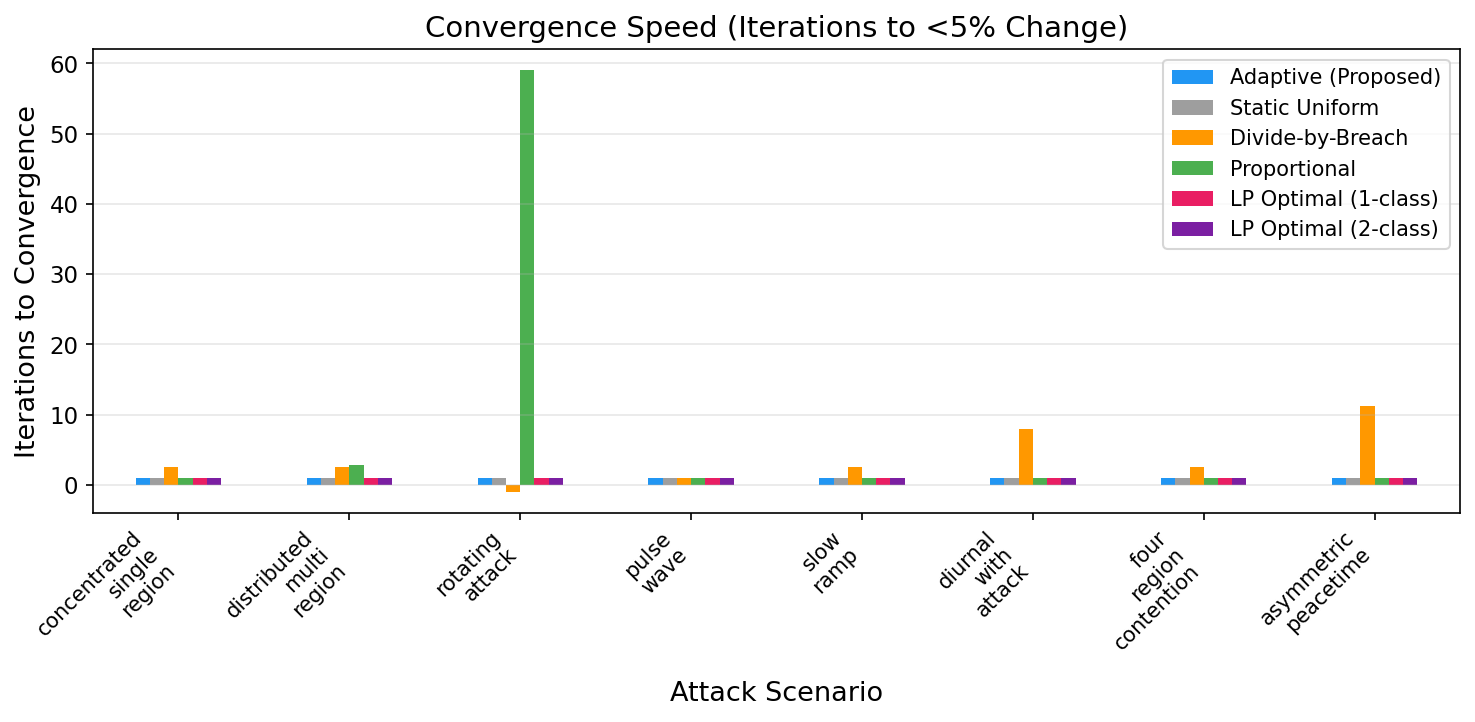}
    \caption{Convergence iterations. Two baselines fail to settle under the distributed scenario. Divide-by-breach (orange) never settles because demand noise repeatedly moves near-threshold regions in and out of the over-limit set, shifting the divisor each cycle. The proportional baseline (green) spikes to 59 iterations because it tracks the $\pm30\%$ per-region demand noise directly, with no reservation floor to damp it, so the largest per-region share keeps moving by more than the 5\% threshold. The proposed algorithm, both LPs, and static uniform settle in one iteration in every scenario.}
    \label{fig:convergence}
\end{figure}

\subsubsection{Fairness Under Scarcity}
\label{sec:fairness-results}

Figure~\ref{fig:fairness} shows demand-weighted fairness over time during a rotating attack, measured as Jain's index over per-region satisfaction ratios (high-priority demand served divided by high-priority demand). The takeaway of this subsection is narrow and we state it as such: among the budget-respecting allocators, fairness is \emph{not} a distinguishing axis. The demand-aware algorithms (ours, all three LPs, weighted max-min, and proportional) all hold the index in a narrow band, effectively tied, because each routes capacity toward demand and so equalizes satisfaction ratios across regions by construction. Our algorithm stays between 0.97 and 0.99 across every scenario; the band widens to 0.94--0.99 once the two-class LP is included, whose floor of 0.94 on the distributed scenario is the one demand-aware outlier. The two new baselines fall inside this band (weighted max-min 0.96--0.99, the reservation-respecting LP 0.96--0.99), so adding them does not change the conclusion that fairness is not a distinguishing axis among budget-respecting allocators. The one allocator that does separate is static uniform, which sits near 0.95 because it ignores demand entirely and persistently under-serves high-traffic regions relative to quiet ones. The index does not reach exactly 1 because demand is discrete and carries per-region noise, and because under contention some regions are more contended than others; the small oscillations in the figure are these per-cycle adjustments, not a drift away from fairness. We include fairness to show the proposed algorithm gives up nothing on this axis, not as a dimension on which it wins.

\begin{figure}[t]
    \centering
    \includegraphics[width=\columnwidth]{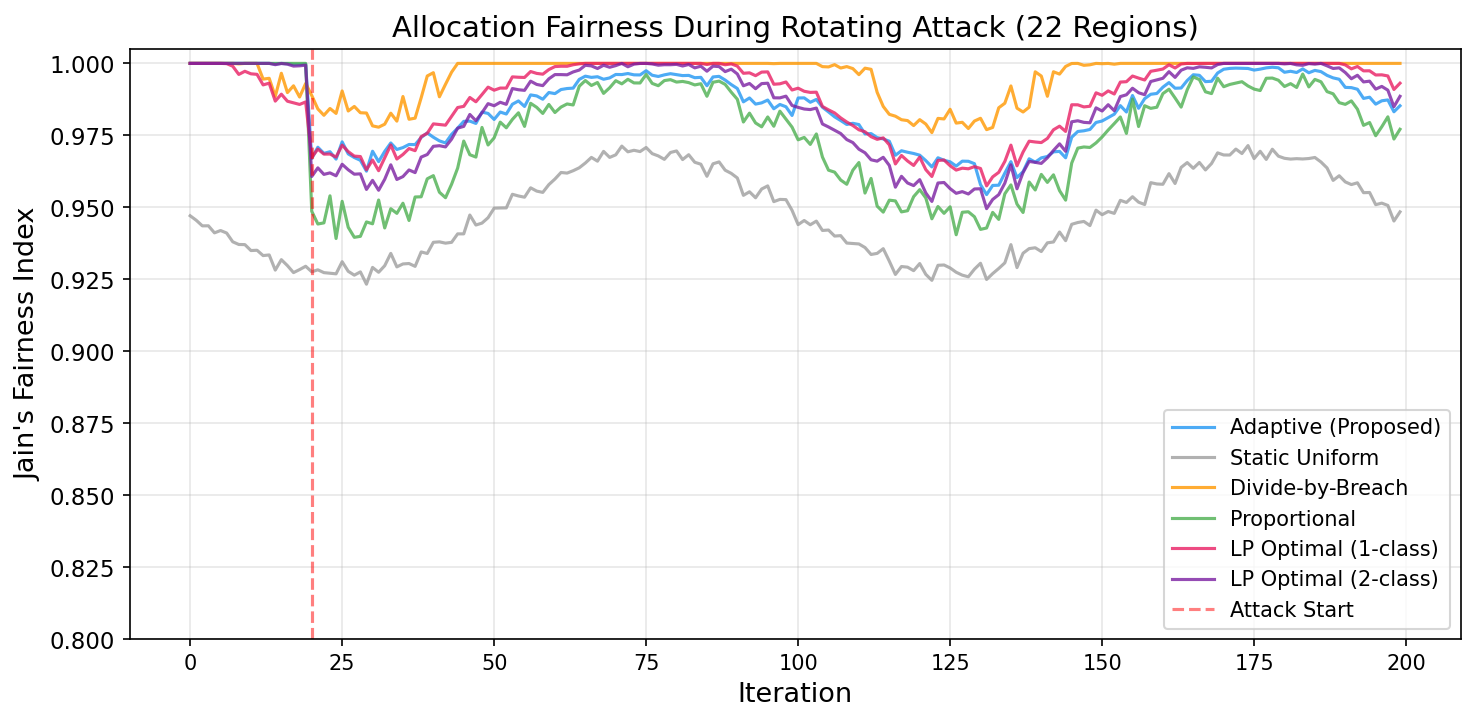}
    \caption{Demand-weighted fairness during rotating attack, y-axis scaled to [0.8, 1.0]. The demand-aware allocators are effectively tied between 0.97 and 0.99; only static uniform separates, penalizing high-traffic regions near 0.95.}
    \label{fig:fairness}
\end{figure}

\subsubsection{Utilization Over Time}

Figure~\ref{fig:util_time} shows system-wide utilization over time during a concentrated single-region attack. Our algorithm holds at 100\% throughout, while the other budget-respecting allocators dip slightly in low-demand phases of the diurnal cycle, where a class's demand falls below its reservation and the single-class methods leave the difference idle. This is the guardrail property discussed above, and it holds under contention (aggregate demand at or above the budget): the proposed algorithm never leaves capacity idle while unmet demand exists elsewhere, and never over-commits. We stress that this efficiency is not by itself the contribution, it is a precondition; the high-priority-service results above are what distinguish the allocators.

\begin{figure}[t]
    \centering
    \includegraphics[width=\columnwidth]{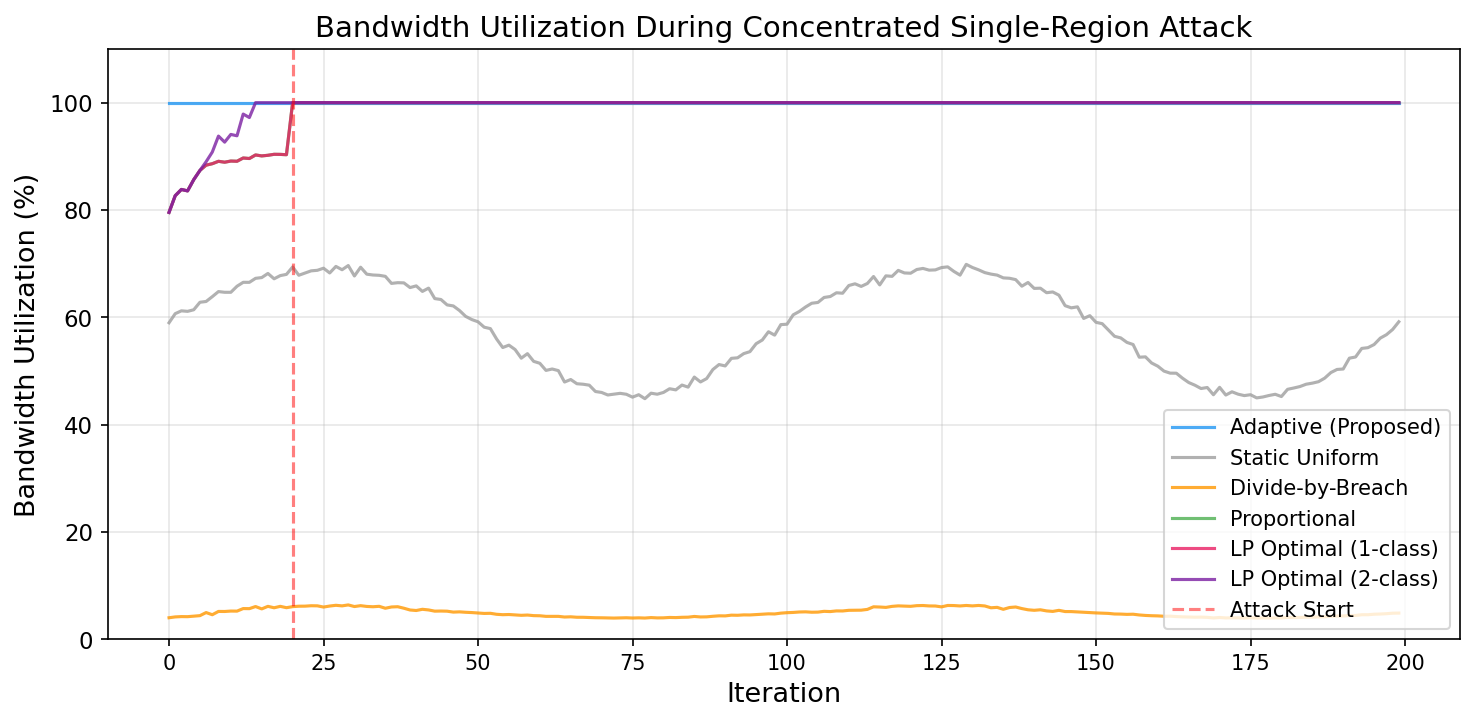}
    \caption{System-wide utilization over time during concentrated attack. The proposed algorithm sustains 100\% while the other budget-respecting allocators leave small residuals idle during low-demand phases.}
    \label{fig:util_time}
\end{figure}

\section{Counting-Pipeline Validation}
\label{sec:deployment}

We validated the end-to-end counting and control pipeline of the CDN instantiation in a 5-region prototype deployment running on Docker. This experiment exercises the data path that feeds the allocator, namely per-pod counting, hierarchical aggregation, and cross-region transport. It does not by itself validate the allocation decisions, which are evaluated in Section~\ref{sec:evaluation}; we are explicit about that boundary below. Each region contains a production-equivalent Nginx proxy with the Rust counting module, an Envoy enforcer with WAF integration, a Local Control Plane (LCP) for per-region aggregation, and a Redis instance. A Central Management Plane (CMP) connects to all 5 LCPs via bidirectional gRPC streams.

\subsection{Setup}

The deployment runs 42 containers across 5 isolated Docker networks (one per region) with uplink and client networks bridging them. The Rust pod counter module runs inside each Nginx proxy, counting per-domain requests using lock-free atomics and flushing deltas to the LCP every second.

Traffic is generated using curl through the proxy, with the full request path exercised: TLS termination, WAF evaluation, domain lookup from Redis, origin proxying, and response delivery. Rate limits are enforced based on allocations pushed from the control plane.

\subsection{Experiment}

We ran a 3-phase experiment:
\begin{enumerate}
    \item \textbf{Peacetime (30s):} 20 RPS legitimate traffic distributed evenly across all 5 regions.
    \item \textbf{Attack (60s):} 100 RPS concentrated on region-a, with legitimate traffic continuing on all regions.
    \item \textbf{Recovery (30s):} Attack stops, observe allocation return to baseline.
\end{enumerate}

\subsection{Results}

We logged per-region aggregated request counts (RPM) at the CMP. The data shows clear attack concentration and recovery:

During peacetime, all regions report approximately equal counts. When the attack begins, the targeted region's count rises well above the others while they remain at baseline. After the attack stops, the targeted region returns to baseline. The captured counts reflect the load the testbed sustained through the full proxy path rather than the nominal offered rate, and the logging cadence in this run averaged roughly one sample per minute rather than the 10-second control cadence; reconciling the prototype's logging units and cadence with the control plane is left to future hardening.

The LCPs tracked per-domain counters across all 5 regions, the CMP maintained gRPC connections to all clusters, and the counting pipeline (Rust FFI in Nginx, flush to LCP, aggregation at CMP) operated without errors throughout the experiment.

This demonstrates that the counting and aggregation pipeline, from pod-level atomic counting through hierarchical aggregation to the central plane, functions correctly with real HTTP traffic and production-grade proxy infrastructure. We do not claim it validates the allocation algorithm itself: the prototype logged input counts but not the allocations, rate limits, or served-traffic outcomes that would be needed to evaluate allocation decisions in deployment. Closing that gap, by logging allocator output and served traffic in the testbed, is the natural next step.

\section{Related Work}
\label{sec:related}

\textbf{Fair resource allocation.}
Weighted fair queueing~\cite{wfq} and its variants provide per-flow fairness at a single network element, and proportional share scheduling offers similar guarantees for CPU and memory. Dominant Resource Fairness~\cite{drf2011} generalizes max-min fairness to users with heterogeneous demands over multiple resource \emph{types}; our problem is different in shape, a single conserved resource shared across locations and two priority classes, but DRF's lesson that the fairness objective must match the structure of demand carries over, and we include a weighted max-min baseline in the DRF style in our evaluation (Section~\ref{sec:served-results}), where it serves more high-priority load than the proposed algorithm precisely by weighting the protected class more aggressively at the expense of the best-effort class. Our contribution is applying fairness principles to a two-dimensional problem (locations $\times$ two service classes) with the additional constraint that total capacity must be strictly conserved, and identifying that a throughput-maximizing objective is the wrong one when one class carries contending load. This positioning is stated for the general allocation problem; the DDoS setting is the instantiation in which we evaluate it.

The water-filling algorithm from information theory provides the theoretical optimum for single-dimensional allocation, and our three LP baselines implement it for the single-class, reservation-respecting, and pooled two-class cases respectively. The two-class LP is the throughput optimum for the full problem, so our algorithm cannot beat it on total served load within a cycle. The interesting gap is on \emph{high-priority} demand, where our demand-proportional allocator can exceed the throughput optimum precisely because the latter spends capacity on contending load to maximize total throughput; the reservation-respecting LP confirms this is a property of the objective under a per-class floor, not of our heuristic.

\textbf{Centralized capacity allocation in production systems.}
The closest systems to our setting are the centralized bandwidth allocators of production wide-area networks. SWAN~\cite{swan2013} centrally decides how much each service may send, allocating higher-priority classes first with weighted max-min fairness within each class; BwE~\cite{bwe2015} allocates WAN bandwidth hierarchically across services with mixed guaranteed and best-effort classes. Both solve a richer problem than ours (multiple paths, many services, bandwidth functions) with correspondingly heavier machinery, and neither faces our defining constraint that part of the observed demand may be adversarial load the allocator should not maximize. Taiji~\cite{taiji2019} manages global user traffic from edge nodes to data centers to balance utilization; it routes demand to capacity, whereas we distribute capacity to demand under a conserved budget. Our algorithm occupies a deliberately simpler point: two classes, one budget, solver-free $O(KN)$ proportional redistribution with provable conservation.

\textbf{DDoS defense at the CDN edge (the evaluated instantiation).}
Surveys of DDoS defense in cloud environments~\cite{ddos_cloud_survey} catalog the detection and mitigation landscape; the question we study, how to distribute a fixed capacity budget across locations while an attack is being absorbed, sits downstream of that landscape and is comparatively unaddressed. Alcoz et al.~\cite{alcoz2022} proposed aggregate-based congestion control for pulse-wave DDoS attacks using P4 programmable switches. Their system infers attack patterns via online clustering and applies per-packet rate limiting on Intel Tofino hardware. This operates at L3/L4 per-flow granularity, which is complementary to our per-domain geographic allocation. We address the higher-level question of how to distribute a domain's capacity budget across regions.

Recent work on coordinated cloud-edge DDoS scrubbing~\cite{tmc2024} addresses predictive resource coordination between scrubbing centers and edge locations. Their focus is on deciding when to activate scrubbing (a binary decision), while we address how to continuously redistribute a fixed capacity budget across many locations during an ongoing attack.

\textbf{Detection across distributed domains.}
Chen et al.~\cite{ieee2008} address collaborative detection of DDoS attacks across multiple network domains, correlating change-point signals between domains to identify attacks earlier than any single vantage point; more recent detection work spans programmable data planes~\cite{signature2021}, graph neural networks~\cite{deepgraph2024}, and distributed SDN-based prediction at the edge~\cite{edge_cloud2025}. All of this targets the detection problem, identifying that an attack is underway, which is upstream of and complementary to the allocation problem we study: we take detection and class labels as inputs and decide how to distribute a fixed capacity budget across locations once contention is recognized.

\textbf{Allocation and defense under attack.}
Kumar and Bhuyan~\cite{kumar2022} applied game theory to defend elastic and inelastic services against DDoS, modeling the interaction between attacker and defender as a two-player game. Their work provides theoretical bounds but does not address the practical problem of geographic distribution or per-class capacity sharing.

Cooperative DDoS defense across edge and cloud environments has been studied in~\cite{edge_cloud2025}, but this work focuses on detection coordination rather than capacity allocation.

\textbf{CDN resource management.}
Content placement and request routing in CDNs is well-studied, from the classical survey of Pallis and Vakali~\cite{cdn_survey} to production-scale edge traffic management~\cite{taiji2019}, but these systems optimize for latency, cache hit rates, and utilization balance, not for defending a fixed capacity budget under contention. Our evaluation is specific to the attack scenario where a domain's capacity budget must be defended across regions while maintaining service for legitimate users; the allocation model itself is not.

\textbf{Traffic classification for DDoS.}
Signature-based classification~\cite{signature2021} and deep learning approaches~\cite{deepgraph2024} focus on the detection problem: identifying which traffic is malicious. We take classification as input and address the resource allocation problem that follows: given labeled demand, how to distribute limited capacity fairly.

\section{Discussion and Limitations}
\label{sec:limitations}

\textbf{Generality beyond the evaluated instantiation.}
We evaluated the algorithm on CDN capacity defense because that is the problem that motivated it and the one for which we have a deployment and a calibrated traffic model. Nothing in the algorithm, its invariants, or its convergence result depends on the contending load being adversarial. The model is a conserved budget shared across locations and two priority classes, with demand that is skewed across locations and time-varying; any setting matching that shape is a candidate, for example splitting a fixed egress or database-connection budget across data centers between latency-critical and batch workloads, or sharing a licensed throughput cap across tenants between a premium and a standard tier. Two properties travel directly: conservation holds by construction for any non-negative demand (Theorems 1--3), and the inter-class borrowing benefit appears wherever one class's demand is temporally bursty so that it frees capacity the other class can use during lulls. Two properties are instantiation-specific and would need re-checking elsewhere: the concrete concentration of demand across locations (a parameter of our traffic model) and the existence of a meaningful priority split between the two classes (if both classes are equally critical, the inter-class step still conserves capacity but the ``protect the high-priority class'' framing no longer applies). We do not claim to have validated these other instantiations; we claim the model is not specific to DDoS and identify what a port would have to verify.

\textbf{Information staleness.}
The algorithm operates on demand observations that are 10-20 seconds old in our deployment, due to the hierarchical aggregation pipeline (Section~\ref{sec:system}). Under demand patterns that shift faster than the observation delay, allocations will lag behind reality. In our deployment, the attacks we have observed evolve on the timescale of minutes rather than seconds, so the 10-second decision cycle has been sufficient there; an instantiation with faster demand shifts would need a proportionally faster pipeline. The CRDT-based synchronization ensures that even under network partitions between locations, each location continues operating with the best available (possibly stale) information.

\textbf{Oscillation risk.}
Because the algorithm is reactive (it allocates based on observed demand), there is a theoretical risk of oscillation: high demand at a location triggers allocation, which satisfies the demand, which reduces observed demand, which reduces allocation. We argue analytically that this does not occur in our setting: high-priority demand is persistent (it does not vanish once served) and excess best-effort demand exceeds capacity (so allocation never fully satisfies it), so neither class's observed demand collapses in response to being served. Our property-based tests exercise the single-cycle invariants that underpin this argument (exact conservation and non-negativity across randomized demand vectors); they do not model demand dynamics across cycles, so the absence of oscillation rests on the analytical argument above rather than on a multi-cycle experiment. A time-series stability experiment over evolving demand is future work.

\textbf{Class-assignment dependency.}
The inter-class borrowing mechanism depends on demand being assigned to the right class. If a large share of high-priority demand is mislabeled best-effort, the high-priority class will be undersized. In the CDN instantiation the system degrades gracefully because the best-effort class is rate-limited rather than dropped, so mislabeled requests still get partial service, and as the classifier gains confidence they migrate back to the high-priority class. In a non-adversarial instantiation the analogous risk is a misconfigured priority tag; the same graceful-degradation argument applies as long as the best-effort class is served rather than denied.

\textbf{Single-budget scope.}
The algorithm operates independently per conserved budget (per domain in the CDN instantiation). If several budgets share a deeper downstream resource, per-budget allocation does not capture the cross-budget contention. Extending to coupled budgets is future work; in our deployment domains map to separate origin pools, so the budgets are genuinely independent.

\textbf{Cold start.}
When a budget first comes under contention with no prior allocation history, the algorithm starts from a uniform distribution and adapts within 1-2 cycles. During this cold-start window (up to 20 seconds in the CDN instantiation), some high-priority demand may be denied at heavy-demand locations. This is acceptable given the alternative of no adaptation at all.

\section{Conclusion}
\label{sec:conclusion}

We presented a two-level algorithm for allocating a conserved capacity budget across many locations and two service classes. The algorithm combines intra-class redistribution across locations with inter-class elastic borrowing between classes, maintains conservation and fairness invariants, runs in $O(KN)$ time, and reaches a stable allocation in a single iteration under stationary demand because it carries no per-cycle state. We evaluated it on the instantiation that motivated it, defending a CDN's per-domain budget under volumetric attack, across 8 contention scenarios on a 22-location topology, with contending load measured in the class it targets; there it serves high-priority demand competitively with a single-class LP optimum while never over-committing the budget.

Two findings are worth carrying forward, and both are about the problem rather than the application. First, a throughput-maximizing objective is the wrong objective under contention: a two-class LP that maximizes total served load serves less high-priority demand than our demand-proportional allocator in most scenarios, because it cannot distinguish high-priority load from the contending load it is also serving. A demand-aware allocator that respects per-class reservations avoids this by construction. Second, the added complexity of inter-class borrowing earns its place specifically under temporally bursty load: an ablation, realized as a registered allocator variant and run through the same harness, shows it improves high-priority service under pulse-wave load and is neutral (0.1 point or less) under stationary demand. We would rather state plainly where the mechanism helps and where it does not than claim a uniform advantage the data does not support. Validating the algorithm in further instantiations beyond CDN defense, capturing classifier dynamics in the evaluated one, and validating allocation decisions (not only the counting pipeline) in deployment, are the natural next steps.


\begin{thebibliography}{00}
\bibitem{alcoz2022} A.~Gran~Alcoz, M.~Strohmeier, V.~Lenders, and L.~Vanbever, ``Aggregate-Based Congestion Control for Pulse-Wave DDoS Defense,'' in \emph{Proc. ACM SIGCOMM}, 2022.
\bibitem{tmc2024} R.~Zhou, Y.~Zeng, L.~Jiao, Y.~Zhong, and L.~Song, ``Online and Predictive Coordinated Cloud-Edge Scrubbing for DDoS Mitigation,'' \emph{IEEE Trans. Mobile Computing}, vol. 23, no. 10, pp. 9208--9223, 2024.
\bibitem{ieee2008} Y.~Chen, K.~Hwang, and W.-S.~Ku, ``Collaborative Detection of DDoS Attacks over Multiple Network Domains,'' \emph{IEEE Trans. Parallel and Distributed Systems}, vol. 18, no. 12, pp. 1649--1662, 2007.
\bibitem{kumar2022} B.~Kumar and B.~Bhuyan, ``Using Game Theory to Defend Elastic and Inelastic Services Against DDoS Attacks,'' in \emph{Lecture Notes in Networks and Systems}, Springer, 2022.
\bibitem{edge_cloud2025} H.~Zhou, Y.~Zheng, X.~Jia, and J.~Shu, ``Collaborative Prediction and Detection of DDoS Attacks in Edge Computing: A Deep Learning-Based Approach with Distributed SDN,'' \emph{Computer Networks}, vol. 225, art. 109642, 2023.
\bibitem{wfq} A.~Demers, S.~Keshav, and S.~Shenker, ``Analysis and Simulation of a Fair Queueing Algorithm,'' in \emph{Proc. ACM SIGCOMM}, 1989.
\bibitem{drf2011} A.~Ghodsi, M.~Zaharia, B.~Hindman, A.~Konwinski, S.~Shenker, and I.~Stoica, ``Dominant Resource Fairness: Fair Allocation of Multiple Resource Types,'' in \emph{Proc. USENIX NSDI}, 2011, pp. 24--37.
\bibitem{swan2013} C.-Y.~Hong, S.~Kandula, R.~Mahajan, M.~Zhang, V.~Gill, M.~Nanduri, and R.~Wattenhofer, ``Achieving High Utilization with Software-Driven WAN,'' in \emph{Proc. ACM SIGCOMM}, 2013, pp. 15--26.
\bibitem{bwe2015} A.~Kumar \emph{et al.}, ``BwE: Flexible, Hierarchical Bandwidth Allocation for WAN Distributed Computing,'' in \emph{Proc. ACM SIGCOMM}, 2015.
\bibitem{taiji2019} D.~Chou \emph{et al.}, ``Taiji: Managing Global User Traffic for Large-Scale Internet Services at the Edge,'' in \emph{Proc. ACM SOSP}, 2019, pp. 430--446.
\bibitem{ddos_cloud_survey} N.~Agrawal and S.~Tapaswi, ``Defense Mechanisms Against DDoS Attacks in a Cloud Computing Environment: State-of-the-Art and Research Challenges,'' \emph{IEEE Communications Surveys \& Tutorials}, vol. 21, no. 4, pp. 3769--3795, 2019.
\bibitem{cdn_survey} G.~Pallis and A.~Vakali, ``Insight and Perspectives for Content Delivery Networks,'' \emph{Communications of the ACM}, vol. 49, no. 1, 2006.
\bibitem{signature2021} M.~Dimolianis, A.~Pavlidis, and V.~Maglaris, ``Signature-Based Traffic Classification and Mitigation for DDoS Attacks Using Programmable Network Data Planes,'' \emph{IEEE Access}, vol. 9, pp. 113061--113076, 2021.
\bibitem{deepgraph2024} L.~Barsellotti, L.~De~Marinis, F.~Cugini, and F.~Paolucci, ``FTG-Net: Hierarchical Flow-to-Traffic Graph Neural Network for DDoS Attack Detection,'' in \emph{Proc. IEEE Int. Conf. High Performance Switching and Routing (HPSR)}, 2023, pp. 173--178.
\end{thebibliography}
\end{document}